\newtheorem{proofpart}{Part}
\newcommand{\repeattheorem}[1]{%
	\begingroup
	\renewcommand{\thetheorem}{\ref{#1}}%
	\expandafter\expandafter\expandafter\theorem
	\csname reptheorem@#1\endcsname
	\endtheorem
	\endgroup
}
\xdef\csname reptheorem@#1\endcsname{%
		\unexpanded\expandafter{\BODY}%
	}%
\unskip\label{#1}\endtheorem
\setlist[itemize]{noitemsep, topsep=1mm}
\algnewcommand\algorithmicinput{\textbf{Input:}}
\algnewcommand\Input{\item[\algorithmicinput]}
\newcommand{\WRP}{\par\qquad\(\hookrightarrow\)\enspace}
\begin{document}
\title{From Checking to Inference: Actual Causality Computations as Optimization Problems\thanks{Extended version with proofs of ATVA2020 paper with the same title. Work supported by the German Research Foundation (DFG) under grant no. PR1266/4-1, Conflict resolution and causal inference with integrated socio-technical models. }}
	
%
%the Deutsche Forschungsgemeinschaft (DFG) 
\titlerunning{Actual Causality Computations}
% If the paper title is too long for the running head, you can set
% an abbreviated paper title here
%
\ifdefined\anonymous
\author{Anonymized  author(s)}
\else
\author{ Amjad Ibrahim \and  Alexander Pretschner
	\institute{Department of Informatics, Technical University of Munich, Germany}
	\\\email{\{ibrahim,  pretschn\}@in.tum.de}
}
\fi
\authorrunning{A. Ibrahim and A. Pretschner}
% First names are abbreviated in the running head.
% If there are more than two authors, 'et al.' is used.
%

%
\maketitle              % typeset the header of the contribution
\begin{abstract}
Actual causality is increasingly well understood. Recent formal approaches, proposed by Halpern and Pearl, have made this concept mature enough to be amenable to automated reasoning. Actual causality is especially vital for building accountable, explainable systems. Among other reasons, causality reasoning is computationally hard due to the requirements of counterfactuality and the minimality of causes. 
Previous approaches presented either inefficient or restricted, and domain-specific, solutions to the problem of automating causality reasoning.
In this paper, we present a novel approach to formulate different notions of causal reasoning, over binary acyclic models, as optimization problems, based on quantifiable notions within counterfactual computations. We contribute and compare \textit{two} compact, non-trivial, and sound integer linear programming (ILP) and  Maximum Satisfiability (MaxSAT) encodings to \textit{check} causality. Given a candidate cause, both approaches identify what a minimal cause is. Also, we present an ILP encoding to \textit{infer} causality without requiring a candidate cause. We show that both notions are efficiently automated. Using models with more than $8000$ variables,  checking is computed in a matter of seconds, with MaxSAT outperforming ILP in many cases. In contrast, inference is computed in a matter of  minutes.

%\begin{keyword}
%	actual causality\sep automated reasoning and inference 
%\end{keyword}
\end{abstract}
\section{Introduction}\label{sec:introduction}
%      cf when analyzing failures of critical systems
Actual causality is the retrospective linking of \textit{effects} to  \textit{causes}~\cite{halpern2016actual,pearl1996causation}. 
As part of their cognition, humans reason about actual causality to explain particular past events, to control future events, or to attribute moral responsibility and legal liability~\cite{hopkins2002strategies}. Similar to humans, it is useful for systems in investigating  security protocols~\cite{kuennemann2018automated}, safety accidents~\cite{ladkin1998because}, software or  hardware models~\cite{beer2012explaining,leitner-fischer2013causality,chockler2008causes}, and database queries~\cite{meliou2010causality}. 
More importantly, actual causality is central for enabling social constructs such as accountability in Cyber-Physical systems~\cite{nfm,Kacianka_2019,ibrahim2020ECAI}, in information systems~\cite{feigenbaum2011accountability}, 
and explainability in artificial intelligence systems~\cite{miller2018explanation}.  

Attempts to formalize a precise definition of an \textit{actual cause} go back to the eighteenth century when Hume~\cite{hume1748An} introduced \textit{counterfactual reasoning}. Simply put, counterfactual reasoning concludes that event $A$ is a cause of event $B$ if $B$ does not occur if $A$ does not occur. However, this simple reasoning cannot be used with interdependent, multi-factorial, and complex causes~\cite{lewis1973causation}. Recently, Halpern and Pearl formalized HP--a seminal model-based definition of actual causality that addresses many of the challenges facing naive counterfactual reasoning~\cite{halpern2016actual}.

Because of its formal foundation, HP enables automated causality reasoning. We distinguish two notions of reasoning: \textit{checking} and \textit{inference}. 
\textit{Checking} refers to verifying if a candidate cause is an actual cause of an effect, i.e., answering the question ``is $\vec{X}$ a cause of $\varphi$?'' \textit{Inference} involves finding a cause without any candidates, i.e., answering the question ``why $\varphi$?'' Using HP, causality checking is, in general, $D^P_1$-complete and $\mathit{NP}$-complete for singleton (one-event) causes~\cite{halpern2015modification}; the difference is due to a minimality requirement in the definition (details in \autoref{sec:preliminaries}). Intuitively, \textit{inference} is at least as hard. The complexity led to restricted (e.g.,  singleton causes, single-equation models \cite{meliou2010causality}) utilizations of HP (\autoref{sec:realted}). 
%To the best of our knowledge, 
All these utilizations exploit domain-specificities (e.g., database repairs \cite{meliou2010causality,salimi2014causes} \cite{bertossi2018characterizing}), which hinders taking advantage of the available approximations for general queries. In prior work, we proposed an approach to \textit{check} causality in acyclic models \textit{with binary variables} based on the satisfiability problem (SAT)~\cite{ibrahim2019}. 
The approach required enumerating all the satisfying assignments of a formula (ALL-SAT), which obviously is impacted by the solver's performance~\cite{zhao2009asig}. 
Thus, previous approaches fail to automate answering queries for larger models.

Large models of causal factors are likely to occur especially when generated automatically from other sources for purposes of accountability and explainability~\cite{ibrahim2020ECAI,nfm,miller2018explanation}. Further,  models of real-world accidents are sufficiently large to require efficient approaches.  For instance, a model of the 2002 mid-air collision in Germany consists of 95 factor~\cite{ueberWBG} (discussed in \cite{ibrahim2020ECAI}).  
Thus, in this paper, we present a novel approach to formulate actual causality computations in binary models as optimization problems. We show how to construct quantifiable notions within counterfactual computations, and use them for checking and inference. 

We encode our checking approach as integer linear programs (ILP), or weighted MaxSAT formulae~\cite{li2009maxsat}. 
Both are well-suited alternatives for Boolean optimization problems. However, MaxSAT has an inherent advantage with binary propositional constraints~\cite{li2009maxsat}. On the other hand, ILP has an expressive objective language that allows us to tackle the problem of causality inference as a multi-objective program.  Accordingly, we contribute an approach with \textit{three} encodings. The first two cover causality \textit{checking}, and better they can determine a \textit{minimal} HP cause from a potentially non-minimal candidate cause; we refer to this ability as \textit{semi-inference}.
The third encoding tackles causality \textit{inference}. 
All these encodings benefit from the rapid development in solving complex and large (tens of thousands of variables and constraints) optimization problems~\cite{koch2013progress,bacchus2018maxsat}. 

We consider our work to be the first to provide an efficient solution to the problem of checking and inferring HP causality,  for a large class of models (binary models) without any dependency on domain-specific technologies. We contribute: \textbf{1.)} A sound formulation of causality computations for acyclic binary models as optimization problems, \textbf{2.)} A Java library\footnote{\small\url{https://github.com/amjadKhalifah/HP2SAT1.0/tree/hp-optimization-library}} 
that implements the approaches.  %
\textbf{3.)} An empirical evaluation, using models from multiple domains, of the efficiency and scalability of the approaches in comparison with previous work.

\section{Halpern-Pearl Definition of Actual Causality}\label{sec:preliminaries}
HP uses variables to describe the world, and \textit{structural equations} to define its mechanics~\cite{pearl1996causation}. The variables are split into \textit{exogenous} and \textit{endogenous}. The values of the former, called a \textit{context $\vec{u}$}, are governed by factors that are not part of the modeled world (they represent the environment). The endogenous variables, in contrast, are determined by equations of exogenous and endogenous variables. In this formulation, we look at causes within a specified universe of discourse represented by the endogenous variables, while exogenous variables are not considered to be part of a cause but rather as given information. An equation represents the semantics of the dependency of the endogenous variable on other variables. Similar to Halpern, we limit ourselves to acyclic models in which we can compute a unique solution for the equations given a context $\vec{u}$, which we refer to as \textit{actual evaluation} of the model. %Causal models are visualized in networks where a node is a variable, and an edge is a dependency between two variables.
  A binary model (Boolean variables only) is formalized in \autoref{def:causal_model}.
\begin{definition}\label{def:causal_model} \cite{pearl1996causation}
	\textbf{Binary Causal Model}\\
	A causal model is a tuple $M = \mathcal{(U,V,R,F)}$, where
		\\- $\mathcal{U}$, $\mathcal{V}$ are sets of exogenous variables and endogenous variables respectively, 
		\\- $\mathcal{R}$ associates with $Y \in \mathcal{U \cup V}$ a set  of possible values $\mathcal{R}(Y)$, i.e., $\{0,1\}$,
		\\- $\mathcal{F}$ maps $X \in \mathcal{V}$ to a function $F_X : (\times_{U \in \mathcal{U}}\mathcal{R}(U)) \times (\times_{Y \in \mathcal{V}\backslash\{X\}}\mathcal{R}(Y)) \to \{0,1\}$
\end{definition}

\autoref{def:causal_model} makes precise the fact that $F_X$ determines the value of $X$, given the values of all the other variables. We summarize the causality notations before defining the cause in \autoref{def:ac}. A \textit{primitive event} is a formula of the form $X = x$, for $X \in \mathcal{V}$ and $x$ is a value $\in \{0,1\}$. A sequence of variables $X_1,...,X_n$ is abbreviated as $\vec{X}$. Analogously, $X_1=x_1,...,X_n=x_n$ is abbreviated $\vec{X}=\vec{x}$. $\varphi$ is a Boolean combination of such events.   $(M, \vec{u}) \models X = x$ if the variable $X$ has value $x$ in the unique solution to the equations in $M$ given context $\vec{u}$. The value of  variable $Y$ can be overwritten by a value $y$ (known as an intervention) writing $Y \leftarrow y$ (analogously $\vec{Y} \leftarrow \vec{y}$ for vectors). Then, a causal formula is of the form $[Y_1 \leftarrow y_1, ..., Y_k \leftarrow y_k]\varphi$, where $Y_1, ..., Y_k$ are variables in $\mathcal{V}$ that make $\varphi$ hold when they are set to $y_1,.., y_k$.  We write $(M, \vec{u}) \models \varphi$ if the causal formula $\varphi$ is true in $M$ given $\vec{u}$. Lastly, $(M, \vec{u}) \models [\vec{Y} \leftarrow \vec{y}]\varphi$ holds if we replace variable equations'  in $\vec{Y}$ by equations of the form $Y=y$ denoted by $(M_{\vec{Y} = \vec{y}},\vec{u}) \models \varphi$~\cite{halpern2016actual}.
\begin{definition} \label{def:ac}
	\textbf{Actual Cause} \cite{halpern2015modification}\\ $\vec{X} = \vec{x}$
	is an actual cause of $\varphi$ in $(M,\vec{u})$ if the following
	three conditions hold:\\ \textbf{AC1.} $(M, \vec{u}) \models (\vec{X} = \vec{x})$ and $(M, \vec{u}) \models \varphi$.\\ \textbf{AC2.} There is a set $\vec{W}$ of variables in $\mathcal{V}$ and a setting $\vec{x}'$ of the variables in $\vec{X}$ such that if $(M, \vec{u}) \models \vec{W} = \vec{w}$, then 	$(M, \vec{u}) \models [\vec{X} \leftarrow \vec{x}', \vec{W} \leftarrow \vec{w}]\neg\varphi$.\\ \textbf{AC3.} $\overrightarrow{X}$ is minimal: no non-empty subset of $\vec{X}$ satisfies AC1 and AC2.
\end{definition}
AC1 checks that the cause $\vec{X}=\vec{x}$ and the effect $\varphi$ occurred within the actual evaluation of $M$ given context $\vec{u}$, i.e., the cause is sufficient for the occurrence of the effect. 
AC2 checks the counterfactual (necessary) relation between the cause and effect.  It holds if there exists a setting $\vec{x}'$ for the cause variables $\vec{X}$ different from the actual evaluation $\vec{x}$ (in binary models such a setting is the negation of the actual setting \cite{ibrahim2019}), and another set of variables $\vec{W}$, referred to as a contingency set, that we use to \textit{fix} variables at their actual values, such that $\varphi$ does not occur. The contingency set $\vec{W}$ is meant to deal with issues such as preemption and redundancy. 
Preemption is a problematic situation where multiple possible causes coincide (illustrated by an example below)~\cite{lewis1973causation}; thus a naive counterfactual check cannot determine the  cause \cite{leitner-fischer2013causality}.  AC3 checks that $\vec{X}$ is minimal in fulfilling the previous conditions. 
To check a cause, we need to think of two worlds (variable assignments): the \textit{actual} world with all the  values known to us, and the counterfactual one in which the \textit{cause} and \textit{effect} take on different  values. Two factors further complicate the search for this counterfactual world. First, finding an arbitrary $\vec{W}$, such that AC2 holds which is exponential in the worst case. 
Second, no (non-empty) subset of $\vec{X}$ is sufficient for constructing such a counterfactual world.  
Halpern shows that checking causality is in general $D^P_1$-complete \cite{halpern2015modification,aleksandrowicz2014computational}, i.e., checking \textit{AC1} is $P$, checking \textit{AC2} is $NP$-complete, and checking \textit{AC3} is $co-NP$-complete.  Complexity considerations for binary models suggest a reduction to SAT or ILP \cite{halpern2015modification}; in this paper, we show concrete ILP and MaxSAT formulations to check, and an ILP formulation to infer a cause.

\textit{\textbf{Example}}: \textit{Throwing rocks}~\cite{lewis1973causation} is a problematic example from philosophy: Suzy and Billy both throw a rock at a bottle that shatters if one of them hits. We know Suzy's rock hits the bottle slightly earlier than Billy's and both are accurate throwers. Halpern models this story using the endogenous variables
$ST,$ $BT$ for ``Suzy/Billy throws'', with values 0 (the person does not throw) and 1 (s/he does), $SH, BH$ for ``Suzy/Billy hits'', and $BS$ for ``bottle shatters.'' Two exogenous variables $ST_{exo}, BT_{exo}$ are used to set the values. The equations:

\noindent -- $ST = ST_{exo}$\indent\indent\indent -- $BT = BT_{exo}$ \indent\indent\indent -- $SH = ST$ \\
\noindent -- $BH = BT \land \neg SH$ \indent -- $BS = SH \lor BH$

%\noindent\begin{minipage}[c]{0.45\textwidth}
%		\begin{itemize}
%		\item $BS = SH \lor BH$
%		\item $SH = ST$
%		\item $BH = BT \land \neg SH$
%		\item $ST = ST_{exo}$
%		\item $BT = BT_{exo}$\\
%	\end{itemize}
%\end{minipage}
%\hfill
%\begin{minipage}[c]{0.5\textwidth}
%	\begin{tikzpicture}
%\node (BT) at (0, 0) {$BT$};
%\node (ST) at (0, 1) {$ST$};
%\node (BH) at (1.5, 0) {$BH$};
%\node (SH) at (1.5, 1) {$SH$};
%\node (BS) at (3, 0.5) {$BS$};
%
%\draw[->]
%(BT) edge (BH)
%(ST) edge (SH)
%(SH) edge (BH)
%(SH) edge (BS)
%(BH) edge (BS);
%\end{tikzpicture}
%
%\captionof{figure}{Rock-throwing example \cite{halpern2015modification}}
%\label{fig:hp:billy_suzy_causal_network}
%\end{minipage}
Assuming a context (exogenous variables' values) $\vec{U}=\vec{u}$: $ST_{exo}=1, BT_{exo} =1$ (both actually threw), the actual evaluation of the model is: $ST=1$, $BT=1$, $BH=0$, $SH=1$, and $BS=1$. Assume we want to \textit{check} whether $ST=1$ is a cause of $BS=1$, i.e., is Suzy's throw a cause for the bottle shattering?  Obviously, AC1 is fulfilled as both appear in the actual evaluation. As a candidate cause, we set  $ST=0$ (for binary models a candidate cause is negated to check counterfactuality; see Lemma 1 in \cite{ibrahim2019}). A first attempt with $\vec{W} = \emptyset$ shows that AC2 does \textit{not} hold. However, if we randomly let $\vec{W} = \{BH\}$, i.e., we replace the equation of $BH$ with $BH = 0$, then AC2 holds because  $(M, \vec{u}) \models [ST \leftarrow 0, BH \leftarrow 0] BS=0$, and AC3 automatically holds since the cause is a singleton. Thus, $ST=1$ is a cause of $BS=1$. Let us check if $ST = 1 \land BT = 1$ is a cause for $BS = 1$.  AC1 and AC2 hold (obviously if they both did not throw, the bottle would not shatter with a $\vec{W} = \emptyset$) but AC3 does not. As we saw earlier,  $ST=1$ alone satisfies AC2. Hence $ST = 1 \land BT = 1$ are not a cause.

 As opposed to the all-or-nothing treatment of causality,  Chockler and Halpern added (\cite{chockler2004responsibility}, modified in \cite{halpern2016actual}) a notion of \textit{responsibility} to a cause. They introduced a metric, degree of responsibility ($dr$), that “measures the minimal number of changes needed to make $\varphi$ counterfactually depend on $X$.”\footnote{Their idea is often motivated with an example of $11$ voters that can vote for Suzy or Billy. If Suzy wins $6$-$5$, we can show that each Suzy voter is a cause of her winning. If Suzy wins $11$-$0$, then each subset of size six of the voters is a cause. The authors argue that in $11$-$0$ scenario, “a voter feels less responsible” compared to $6$-$5$ situation.} \autoref{def:responsibility} shows a shortened version of $dr$ \cite{chockler2004responsibility,halpern2016actual}, which we use for causality inference in our work. 
\begin{definition}\label{def:responsibility}
	The degree of responsibility of $X = x$ w.r.t. a cause $\vec X=\vec x$ for $\varphi$, denoted $dr((\vec X=\vec x),(X = x), \varphi)$, is $0$ if $X = x$ is not in $\vec X=\vec x$; otherwise is $1/(|\vec{W}| + |\vec{X}|)$ given that $|\vec{W}|$ is the smallest set of variables that satisfies AC2.
\end{definition}

% This example illustrates the role  of $\vec{W}$ to address challenges of the counter-factual test. It also shows the complexity of checking causality.

% Next, we consider a real-world complicated causal story. 
%finding the set $W$, and the minimality check adds more complexity by implying checking AC2 again for the power-set of causes. 

%the responsibility of ST = 1 is 1/2, because we had ÑW = {BH}, and the responsibility of BT = 1 is 0, because we showed that Billy’s throw is not a cause according to the HP definition.

\section{Approach}\label{sec:approach}
%in f only W 
Given the triviality of AC1, we presented in prior work, a SAT-based approach to check causality, focusing on AC2~\cite{ibrahim2019}.
The contribution was in how AC2 is encoded into a formula $F$, so that an efficient conclusion of $\vec{W}$ without iterating over the power-set of all variables, is possible. 
Briefly, $F$ described a counterfactual world that
incorporated (1) $\neg\varphi$, (2) a context $\vec{u}$ (size $n$), (3) a setting $\vec{x}'$ for a candidate cause, $\vec{X}$, and (4) a method to infer $\vec{W}$, while maintaining the semantics of $M$. Because checking is done in hindsight, we have the actual evaluation of the variables. Thus, the first three requirements are represented using literals. The semantics of $M$, given by each function $F_{V_i}$  corresponding to $V_i$ (according to \autoref{def:causal_model}), is expressed using an equivalence operator between a variable and its function, i.e.,  $V_i \leftrightarrow F_{V_i}$. This is not done for the cause variables because they are represented by a negation.  To account for $\vec{W}$, we add a disjunction to the equivalence sub-formula with the positive or negative literal of $V_i$, according to its actual evaluation ($1$ or $0$). With this representation of each variable, we check if such a counterfactual world is satisfiable, and hence AC2 holds.

% We present Theorem~\ref{theorem:F} to stress the soundness of $F$ (proof is presented in the supplementary material).

By generalizing $F$, we can also check minimality (AC3). 
Assume we remove the restriction on the cause variables $\vec{X}$, of only be negated literals (allowing them to take on their original values also), and call the new formula $G$. Then $G$ might be satisfiable for the negated cause $\vec{X} = \vec{x'}$ as well as all the other combinations of the cause set. Analyzing all the satisfying assignments of $G$ (All-SAT), allows us to check minimality. Specifically, if we find an assignment such that at least one conjunct of  $\vec{X} = \vec{x}$ takes on a value that equals the one computed from its equation, it means that it is not a required part of the cause, and hence, the cause is not minimal. In many situations,  All-SAT is problematic and decreases the performance, especially if $G$ is satisfiable for a large number of assignments~\cite{zhao2009asig}. \autoref{eq:gbase} shows the  construction of $G$. Because $Y_i$ is the variable form, and $y_i$ is the value, we use $f(Y_i = y_i)$ to convert the variable to a positive or a negative literal, i.e., $Y$ or $\neg Y$. 
\begin{small}
	\begin{equation}
	\begin{aligned}
	G :=  \neg\varphi \wedge \bigwedge\limits_{i=1\ldots n} f(U_i=u_i) 	\wedge\bigwedge\limits_{i=1\ldots m, \not\exists j\bullet X_j=V_i} \left(V_i \leftrightarrow F_{V_i} \lor f(V_i=v_i)\right)
	\end{aligned}
	\label{eq:gbase}
	\end{equation} 
\end{small}

$F$ and $G$ aid us in \textit{checking} if a candidate cause $\vec{X}$ is a \textit{minimal, counterfactual} cause of $\varphi$. If it is not, we cannot use them to \textit{find} a minimal cause from within $\vec{X}$, i.e., \textit{semi-inference}. We, also, cannot use them to find a cause without requiring a candidate cause, i.e., \textit{inference}. To efficiently achieve such abilities, we present a novel formulation of causal queries as optimization problems. 
\subsection{Checking and Semi-inference Queries as Optimization Problems} \label{subsec:ilp} 
 In this section, we focus on the computation of the minimality requirement in causality checking. For that, we conceptualize a technique to check AC2 and AC3 as one problem (AC1 is explicitly checked solely). The result of solving this problem can then be interpreted to conclude AC2, $\vec{W}$, AC3, and, better, what is a minimal subset of the cause if AC3 is violated (semi-inference). To compare the efficiency, we formulate the problem as an integer program, and a MaxSAT formula. Both techniques solve the problem  based on an \textit{objective} function--- a function whose value is minimized or maximized among feasible alternatives.

%expalin distance
To quantify an \textit{objective} for a causal check, we introduce an \textit{integer} variable that we call the \textit{distance}.
Similar to the Hamming distance, it measures the difference between the cause values when $\varphi$ holds true, i.e., actual world, and when it holds false, i.e., the counterfactual world. As shown in \autoref{eq:distance}, it is computed by counting the cause variables whose values assigned by a solver ($x_i'$) is different from their value under the given context ($x_i$). As we shall see, the \textit{distance} is equivalent to the size of the (minimal) cause within our check of a possibly non-minimal cause. As such, the \textit{distance} must be greater than $0$, since a cause is non-empty, and less or equal to the size of $\vec{X}$ ($\ell$), i.e., $1 \le distance \le \ell$. 
\begin{equation}
distance = \sum_{i=1}^{\ell} d(i) \qquad s.t \qquad d(i) = 
\begin{cases}
1-x_i',& x_i=1\\
x_i',& x_i=0
\end{cases}
\label{eq:distance}
\end{equation}

According to  AC3, our \textit{objective function} is then to minimize the  distance; we encode a causality check as an optimization problem that  \textit{minimizes} the number of cause variables while satisfying the constraints for AC2 (counterfactuality and $\vec{W}$). In the following, we present how to derive these constraints for the ILP formulation, and the MaxSAT encoding. Then, we discuss how to interpret the results to (semi-)infer a minimal cause from a possibly non-minimal cause. 

%Specific ILP
\textbf{ILP} is an optimization program with integer variables and linear constraints and objectives. To formulate such a program, we need three elements: \textit{decision variables,  constraints, and objective(s)}. Our \textit{decision variables} are, in addition to the \textit{distance},  the set of exogenous and endogenous variables from the model, i.e., $\vec{U} \cup \vec{V}$. Since we only consider binary variables, their values are bound to be $0$ or $1$.
Since ILP and SAT solvers can be used as complementary tools,  the translation from SAT to ILP is standard \cite{li2004satisfiability}. Therefore, we reuse formula $G$ (\autoref{eq:gbase}) to create the \textit{constraints}.  Constraints from $G$ contain the \textbf{a.)} effect not holding true, \textbf{b.)} the context, \textbf{c.)} each endogenous variable either follows the model equation or the actual value, i.e., part of the  set $\vec{W}$, \textbf{d.)} each element in the cause set $\vec{X}=\vec{x}$ is not constrained, i.e., its equation is removed. Transforming these constraints (on the Conjunctive Normal Form (CNF) level) into linear inequalities is straightforward; we have clauses that can be reduced to ILP directly, e.g., express $y=x_1 \lor x_2$ as $1 \geq 2*y-x_1-x_2 \geq 0$~\cite{li2004satisfiability}. In addition,  we add a constraint to calculate the distance according to \autoref{eq:distance}. 

\textbf{MaxSAT.}
The maximum satisfiability problem (MaxSAT) is an optimization variant of SAT~\cite{li2009maxsat}. In contrast to SAT, which aims to find a satisfying assignment of all the clauses in a  formula, MaxSAT aims to find an assignment that maximizes the number of satisfied clauses. Thus, MaxSAT allows the potential that some clauses are unsatisfied. In this paper, we use \textit{partial} MaxSAT solving, which allows specific clauses to be unsatisfied, referred to as \textit{soft clauses}; contrary to the \textit{hard} clauses that must be satisfied~\cite{li2009maxsat}. A soft clause can be assigned a \textit{weight} to represent the cost of not satisfying it. In essence, a weighted partial MaxSAT problem is a minimization problem that minimizes the cost over all solutions. Unlike ILP, the objective in MaxSAT is immutable. Thus, we need to construct our formula in a way that mimics the concept of the distance.

 %MaxSat is an alternative of 0-1 ILP. 
 %However, the performance of state of the art solvers are remarkably enhancing as can be seen in the MaxSat competition \cite{bibid} especially for the "very boolean" problems. 

%maxsat specific
As shown in \autoref{eq:gmax}, the MaxSAT encoding also uses $G$ (shown in \autoref{eq:gbase}) as a base. $G$ embeds all the mandatory parts of any solution. Thus, we use the CNF clauses of $G$ as \textit{hard} clauses. On the other hand, we need to append the cause variables ($\vec{X}$) as \textit{soft} clauses (underlined in \autoref{eq:gmax}). Since the solver would minimize the cost of unsatisfying the ($\vec{X}$) clauses, we represent each cause variable as a literal according to its original value (when $\varphi$ holds). Because this is already in CNF, it is easier to assign weights. We assign $1$ as a cost for unsatisfying each cause variable's clause, i.e., when $X_i$ is negated in the (solved) counterfactual world. Then, the overall cost of unsatisfying the underlined parts of the formula is the count of the negated causes, i.e., the size of the minimal cause. Essentially, this concept maps directly to the \textit{distance}, which the MaxSAT solver will minimize. In contrast to ILP, we cannot specify a lower bound on the MaxSAT objective. 
Thus, we need to express the non-emptiness of a cause, as \textit{hard} clauses. A \textit{non-empty} cause means that at least one cause variable $X_j$ does not take its original value, \textit{and} does not follow its equation due to an intervention. The first conjunction (after $G$) in \autoref{eq:gmax} ensures the first requirement, while the second corresponds to the second case.
% The first ensures that not all the variables are equal to their original value. The second ensures that not every variable is allowed to evaluate according to its equation.   
\begin{small}
\begin{equation}
\begin{aligned}
G_{max} :=  G \wedge \neg(\bigwedge\limits_{i=1\ldots\ell}  f(X_i=x_i)) \land \neg(\bigwedge\limits_{i=1\ldots\ell}  X_i \leftrightarrow F_{X_i}) 	 \underline{\wedge\bigwedge\limits_{i=1\ldots\ell} f(X_i= x_i)}
\end{aligned}
\label{eq:gmax}
\end{equation} 
\end{small}
\noindent\textbf{Results.} 
With the above, we illustrated the formulation of a causal checking problem. We now discuss how to translate their results to a causal answer once they are solved; Algorithm~\ref{algorithm:ILP} formalizes this. The evaluation, in the input, is a list of the variables in $M$ and their values under $\vec{u}$.  Assuming $\vec{C}$ is a representation of the optimization problem (a set of linear constraints (without the objective), or hard/soft clauses), then in Line \ref{alg:line:ilp:assignment}, we solve this problem and process the results in Lines \ref{alg:line:ilp:x}-\ref{alg:line:ilp:w}. The feasibility (satisfiability) of the problem implies that either $\vec{X}$ or a non-empty subset of it is a minimal cause (fulfills AC2 and  AC3). If \textit{distance} (cost returned by the MaxSAT solver) equals the size of $\vec{X}$, then the whole candidate cause is minimal. Otherwise, to find a minimal cause $\vec{X}_{min}$ (semi-inference), we choose the parts of $\vec{X}$ that have different values between the actual and the solved values (Line~\ref{alg:line:ilp:x}).  To determine $\vec{W}$, in Line~\ref{alg:line:ilp:w}, we take the variables whose solved values are the same as the actual evaluation (potentially including $\vec{X}$ variables). Obviously, this is not a minimal $\vec{W}$, which is not a requirement for \textit{checking} HP~\cite{ibrahim2019}. If the model is \textit{infeasible or unsatisfiable}, then HP for the given $\vec{X}$ (checking) and its subsets (semi-inference) does not hold.  
%Note that an optimization problem can have more than one optimal solution. This implies that multiple minimal causes are found. We collect them all, but we present the first one. To pick one of the causes, we need to employ additional metrics.  
\begin{algorithm} [t]
	\caption{Interpreting the Optimization Problem's Results}\label{algorithm:ILP}
	\begin{algorithmic}[1]
		\Input causal model $M$, context $\langle U_1,\ldots,U_n\rangle=\langle u_1,\ldots,u_n\rangle$, effect $\varphi$, candidate cause $\langle X_1,\ldots,X_\ell\rangle = \langle x_1,\ldots,x_\ell\rangle$, evaluation $\langle V_1,\ldots,V_m\rangle=\langle v_1,\ldots,v_m\rangle$
		\Function{CheckCause}{$M, \vec{U}=\vec{u}, \varphi, \vec{X} = \vec{x},\vec{V}=\vec{v}$}
  		\If{$\langle U_1=u_1\ldots U_n=u_n,V_1=v_1'\ldots V_m=v_m'\rangle =$  $\textit{solve}(\vec{C},\textit{objective})$}\label{alg:line:ilp:assignment}
  		\State $\vec{X}_{min} := \langle X'_1...X'_d\rangle$ s.t. $\forall i\forall j\bullet (i\not= j\Rightarrow$ $X'_i\not= X'_j) \wedge (X'_i=V_j\Leftrightarrow v_j'\not= v_j)$\label{alg:line:ilp:x}
		\State $\vec{W} := \langle W_1...W_s\rangle$ s.t. $\forall i\forall j\bullet (i\not= j\Rightarrow$ $W_i\not= W_j) \wedge (W_i=V_j\Leftrightarrow v_j'=v_j)$\label{alg:line:ilp:w}
		\State \Return{$\vec{X}_{min},\vec{W}$}
		\Else{}
		\Return{\textit{infeasible (unsatisfiable)}}
		\EndIf
		\EndFunction
	\end{algorithmic}
\end{algorithm}

\noindent\textbf{Throwing rocks Example:} To illustrate our approach, we show the ILP and MaxSAT encodings to answer the query is $ST=1, BT=1$ a cause of $BS=1$?
\begin{small}
	\begin{equation*}
	\begin{aligned}
	min \enspace d\enspace \textrm{s.t.}\enspace& 
	\{BS  =  0,   \enspace ST_{exo} =  1, \enspace BT_{exo} =  1,  \enspace - SH + BS  \geq  0, \enspace - BH + BS \geq  0,  \\\enspace& - ST + SH  \geq  0, \enspace BT - BH  \geq  0, \enspace - SH - BH  \geq  -1,
	\enspace ST + BT + d  =  2  \}\\
	 G_{max}\enspace\enspace = \enspace&\neg BS  \;\land\; ST_{exo} \land BT_{exo} \;\land\; (BS \leftrightarrow SH \lor BH)
	\;\land\; ((SH \leftrightarrow ST) \lor SH) \;\land \\ 
	\enspace&((BH \leftrightarrow BT \land \neg SH) \lor\neg BH)
	\;\land\;\neg (ST \land BT ) \;\land\; \underline{(ST \land BT)}
	\end{aligned}
	\end{equation*}
\end{small}
Both encodings are solved with a $distance (d)$ (cost) value of  $1$, which indicates that $ST,BT$ is not minimal, and a cause of size $1$ is (semi)-inferred, namely $ST$. The optimal assignment ($\neg BS, ST_{exo}, BT_{exo}, \neg SH, \neg BH,  \neg ST, BT$) showed that the constraints can be guaranteed without changing the value of $BT$, which violates AC3. This shows the enhancement of finding a minimal cause rather than only checking AC3. \autoref{theorem:ilp} states the soundness of our approach  (for proofs see \autoref{sec:ac3_proof})
\begin{reptheorem}{ilp}\label{theorem:ilp}
 The generated optimization problem (ILP program or $G_{max}$) is feasible iff AC3 holds for $\vec{X}$ or a non-empty subset of $\vec{X}$. 
\end{reptheorem}
\subsection{Causality Inference with ILP} \label{sec:why}
The previous approaches utilized the candidate cause $\vec{X}$ to help describe a counterfactual world that proves $\vec{X}$ is a cause of $\varphi$. In this section, we present a method, $ILP_{why}$, to \textit{infer} causality (answer \textit{why $\varphi$?} questions) without requiring $\vec{X}$. Unlike checking, in inference, we cannot aid the solver in a description of the counterfactual world (e.g., negating values of $\vec{X}$). Instead, we describe characteristics of the actual cause that have caused an effect $\varphi$.

%res coz we are looking from scratch so we need more characteristic 
In addition to requirements of counterfactuality and minimality imposed by the conditions in \autoref{def:ac}, we utilize the \textit{degree of responsibility} ($dr$) as a mean to compare actual causes~\cite{chockler2004responsibility}. While the conditions are suitable for determining if $\vec{X}$ is a cause, $dr$ judges the ``quality" of the cause based on an aggregation of its characteristics. Because we may find multiple causes for which the conditions hold, $dr$ is reasonable for comparison. We require our answer to an inference question to be an actual cause with the maximum $dr$.  We come back to this after we construct a formula $G^*$ that is the base of $ILP_{why}$. %This seemed natural, given our previous approaches.

% the new concept is to use a formula again; then add auxilairy vars and ojectives; do i use the distance?
Both negating the effect formula ($\neg\varphi$) and setting the context $f(U_i=u_i)$ remain as in \autoref{eq:gbase}. Because the variables that appear in the effect formula cannot be part of the cause, we represent each with the simple \textit{equivalence} relation, i.e., $V_i \leftrightarrow F_{V_i}$. The complicated part is representing the other variables because any variable can be: \textbf{a.} a cause, \textbf{b.} a contingency-set, or \textbf{c.} a normal variable. Recall, in a counterfactual computation, a cause does not follow its equation, \textit{and} differs from its original value; a contingency-set variable does not follow its equation while keeping its original value; a normal variable follows its equation, regardless of whether it equals the original value or not. Thus, we need to allow variables to be classified in any category in the ``best" possible way. 

To that end, we represent each (non-effect) variable $V_i$ with a disjunction between the equivalence holding and not holding, \textit{and} a disjunction between its original value and its negation: $ \left(  \left(V_i \leftrightarrow F_{V_i}\right)   \lor  \neg \left(V_i\leftrightarrow F_{V_i}\right)  \right) 
		\wedge \left(   V_{i_{orig}} \lor \neg V_{i_{orig}} \right)  $.
Clearly, each disjunction is a tautology. However, this redundancy facilitates the classification into the categories; more importantly, we can incentivize the solver to classify those variables according to specific criteria. 

To be able to guide the solver, we add auxiliary boolean variables (indicators) to each clause (left and right parts of a disjunction). They serve two functions. The first is to \textit{indicate} which clauses hold. Since the two parts of the conjunction are not mutually exclusive, i.e., a variable can follow its equation, yet have its original value, we need \textit{two} indicators $C^1 C^2$. Secondly, similar to the concept of \textit{distance} from \autoref{subsec:ilp}, we use the indicators to describe criteria of the solution. For each variable $V_i$, $C^1_{i}$ is appended to the first two clauses:
$\left( \left(V_i \leftrightarrow F_{V_i}\right) \land C^1_{i} \right)  \lor \left( \neg \left(V_i \leftrightarrow F_{V_i}\right) \land \neg C^1_{i} \right)$. Similarly  $C^2$ is appended to the other clauses. As such, the category of each endogenous variable is determined based on values of $C^1$ and $C^2$. A cause variable would have a $C^1 C^2: 00$ (not following the formula nor its original value); a contingency-set variable has a $C^1 C^2: 01$; and a normal variable has a $C^1 C^2: 10$, or $11$. Formula $G^*$ follows (equivalence relations of effect variables are omitted for space). 
\begin{small}
\begin{equation*}
	\begin{aligned}
		G^* :=  & \neg\varphi \wedge \bigwedge_{i=1\ldots n} f(U_i=u_i) 
		\wedge\bigwedge_{i=1\ldots m} \left( \left( \left(V_i \leftrightarrow F_{V_i}\right) \land C^1_{i} \right)  \lor \left( \neg \left(V_i \leftrightarrow F_{V_i}\right) \land \neg C^1_{i} \right) \right) \\ 
		&\land \left( \left( V_{orig} \land C^2_{i} \right)  \lor \left( \neg V_{orig} \land \neg C^2_{i} \right) \right) 
	\end{aligned}
	\label{eq:gstar}
\end{equation*} 
\end{small}

\begin{reptheorem}{why}\label{theorem:why}
	Formula $G^*$ is satisfiable iff $\exists$ $\vec{X}=\vec{x}$ such that AC2 holds for $\vec{X}$ 
\end{reptheorem}

We now discuss the \textit{objectives} of this formulation. We aim to find an assignment to the constraints in $G^*$ that corresponds to a cause with a maximum $dr$. Recall that $dr$ is $1 / (|\vec{X}|+|\vec{W}|)$. \textit{Maximizing} $dr$ entails \textit{minimizing} $|X|+|W|$. Since the  three sets (cause, contingency, and normal) form the overall model size (excluding  effect and exogenous variables), then minimizing $|X|+|W|$ is equivalent to maximizing the number of normal variables, which concludes our \textit{first objective}. The sum of $C^1$ variables  resembles the number of normal variables; thus, \textit{$objective_1$} is to \textit{maximize} the sum of $C^1$ variables.

The above formulation minimizes $\vec{X}$, and $\vec{W}$ as a whole, following $dr$. For our purpose, we think it is valid to look for causes with higher responsibility first (fewer variables to negate or fix) and favor them over smaller causes. For example, if an effect has two actual causes: one with $2$ variables in $\vec{X}$, $3$ in $\vec{W}$, and the second with $1$ variable in $\vec{X}$, $5$ in $\vec{W}$, we pick the first. That said, we still want to distinguish between $\vec{X}$ and $\vec{W}$ in causes with the same $dr$. Assume we have two causes: the first with $2$ variables in $\vec{X}$,  $3$ in $\vec{W}$, and the second with $3$ in $\vec{X}$, $2$ in $\vec{W}$. Although both are optimal solutions to \textit{$objective_1$}, we would like to pick the one with fewer causes. Thus, we add \textit{\textit{$objective_2$}} to \textit{minimize}   causes, i.e., the number of variables with $C^1$ and $C^2$ equal to $0$. We use hierarchical objectives in ILP, for which the solver finds optimal solution(s) based on the first objective, and then use the second objective to optimize the solution(s). 
  %The second objective required a third control variable, i.e., $C_3$ that captures the case when the first two control variables are $0$. In other words, $C_3= \neg C_1 \wedge \neg C_2$.  

We wrap-up with Algorithm~\autoref{algorithm:why}, which omits the construction of $G^*$. We start by turning  $G^*$ into  linear constraints in Line~\autoref{alg:line:why:constraints}.  The first objective $\textit{obj}_1$, which  maximizes $dr$ by maximizing the sum of $C^1_i$ is added in Line~\autoref{alg:line:why:objective}. 
The second objective, $\textit{obj}_2$, handles minimizing the size of the cause set. 
\begin{algorithm} [t]
	\caption{Causality Inference using $ILP_{why}$}\label{algorithm:why}
	\begin{algorithmic}[1]
		\Input causal model $M$, context $\langle U_1,\ldots,U_n\rangle=\langle u_1,\ldots,u_n\rangle$, effect $\varphi$, evaluation $\langle V_1,\ldots,V_m\rangle=\langle v_1,\ldots,v_m\rangle$
		\Function{FindCause}{$M, \vec{U}=\vec{u}, \varphi, \vec{V}=\vec{v}$}
			\State{ \label{alg:line:why:constraints}
			$\langle Con_1,\ldots Con_n\rangle =$  $\textit{convertToILP}(\textit{CNF}(G^*))$	}
		\State{ \label{alg:line:why:objective}
			$\textit{obj}_1 = Maximize\;  \sum_{i=1}^{m} C^1_{i}\;  s.t.\;  \textit{obj}_1 \leq |\vec{V}|$} 
		\State{ \label{alg:line:why:objective2}
			$\textit{obj}_2 = Minimize\; \sum_{i=1}^{m} (1 - C^1_{i}) * (1- C^2_{i})\;  s.t.\;   |\vec{V}|  \geq \textit{obj}_2 \geq 1$}
		\If{$\langle V_1=v_1'\ldots V_m=v_m', C^1_{1}= c^1_{1}\ldots C^1_{m}= c^1_{m}, C^2_{1}= c^2_{1}\ldots  C^2_{m}= c^2_{m} \rangle$ \WRP $  = \textit{solve}(\vec{Con},\textit{obj}_1, \textit{obj}_2)$}\label{alg:line:why:assignment}
		\State $\vec{X'} := \langle X'_1...X'_{obj_{2}}\rangle$ s.t. $\forall i\forall j\bullet (i\not= j\Rightarrow$ $X'_i\not= X'_j) \wedge (X'_i=V_j\Leftrightarrow \neg c^1_{j} \wedge \neg c^2_{j})$\label{alg:line:why:x}
		\State $\vec{W} := \langle W_1...W_s\rangle$ s.t. $\forall i\forall j\bullet (i\not= j\Rightarrow$ $W_i\not= W_j) \wedge (W_i=V_j\Leftrightarrow (\neg c^1_{j} \wedge  c^2_{j}))$\label{alg:line:why:w}
		\State \Return{$\vec{X'},\vec{W}$}
		\Else{}
		\Return{\textit{infeasible}}
		\EndIf
		\EndFunction
	\end{algorithmic}
\end{algorithm}
We process the results after solving the program in Lines~\autoref{alg:line:why:assignment}-\autoref{alg:line:why:w}. The feasibility of the program means we found a cause (size $\textit{obj}_2$) with the maximum $dr$ for the effect. For the details, we check the indicators of each variable. The cause is composed of variables that have $C^1$ and $C^2$ equal $0$; variables in $\vec{W}$, have $C^1=0$ and $C^2=1$.  

\noindent\textbf{Throwing rocks Example.} Assume we want to answer \textit{why did the bottle shatter $BS=1$? (given both threw)}. The generated program is not shown, but it was solved with ($obj_1 = 2$), i.e., \textit{two} normal variables, and $obj_2=1$, one cause variable.  Based on the indicators, $SH=1$ is the actual cause of $BS=1$, given that $BH=0$. This is the result of having   $C^1_{SH} = 0 \wedge C^2_{SH} =0 $ as opposed to $C^1_{BH} = 0 \wedge C^2_{BH} = 1$.  The result is correct; $SH$ is a cause of $BS$, with the maximum $dr$. Previous references of this example concluded $ST$ as a cause; however, since $SH$ is an identity function, this does not compromise our result.~\footnote{Arguably, the (geodesic) distance between the cause and effect nodes in the graph, can be taken into consideration. In this paper, we do not consider this issue}
\section{Evaluation}\label{sec:evaluation}
To evaluate their efficiency, we implemented our strategies as an open-source library. We used state of the art solvers: \textit{Gurobi} \cite{gurobi} for \texttt{ILP}, and Open-WBO for \texttt{MaxSAT}~\cite{wbo}. In this section, we evaluate the performance, in terms of  \textit{execution time} and \textit{memory allocation}, of the strategies in comparison with previous work. 

{\textbf{Experiment Setup}}
Unfortunately, there are no standard data-sets to benchmark causality computations. Thus, we gathered a dataset of $37$ models, which included $21$ \textit{small} models ($\le 400 $ endogenous variables)--from domains of causality, security, safety, and accident investigation-- and $16$ larger security models from an industrial partner, in addition to artificially generated models. The smaller models contained $9$ illustrative examples from literature (number of endogenous variables in brackets) such as $\mathit{Throwing-Rocks (5)}$, $\mathit{Railroad (4)}$~\cite{halpern2015modification}, $2$ variants of a safety model that describes a leakage in a subsea production system $\mathit{LSP (41)}$ and $\mathit{LSP2 (41)}$~\cite{cheliyan2018fuzzy}, and an aircraft accident model (Ueberlingen, 2002) $\mathit{Ueb (95)}$ \cite{ueberWBG}, $7$ generated binary tress, and a security model obtained from an industrial partner which depicts how insiders within a company steal a master encryption key $\mathit{SMK}$. Because it can be parameterized by the number of employees in a company, we have $14$ variants of $\mathit{SMK}$, $2$ small ones $\mathit{SMK1 (36)}$ and $\mathit{SMK8 (91)}$, and $12$ large models of sizes ($550-7150$).  In addition, we artificially generated $4$ models: $2$ binary trees with different heights, denoted as $\mathit{BT (2047 - 4095)}$, and $2$  trees combined with non-tree random models, denoted as $\mathit{ABT (4103)}$, and  $\mathit{ABT2 (8207)}$. 
We have evidence that such large models are likely  to occur when built automatically from architectures or inferred from other sources~\cite{ibrahim2020ECAI,nfm}. 
Details on the models and the results can be found online.\footnote{Machine-readable models and their description available at \url{https://git.io/Jf8iH}} 

We formulated a total of $\mathbf{484}$ \textit{checking} queries that vary in the context, cause, effect, and consequently differ in the result of AC1-AC3, the size of $\vec{W}$, and the size of the minimal cause. 
For the smaller models, we specified the queries manually according to their sources in literature, and verified that our results match the sources. The approaches, including previous ALL-SAT approach, answered these queries in under a second; hence, we exclude them from our discussion. For the larger models we constructed a total of $\mathbf{224}$ checking queries. We specified some effects (e.g., root of $\mathit{BT}$, or \textit{steal pass phrase} in $\mathit{SMK}$) and used different contexts, and randomly selected causes (sizes 1, 2, 3, 4, 10, 15, and 50) from the models. Since we can reuse the checking queries for inference by omitting the cause, we created $\mathbf{180}$ inference queries including $\mathbf{67}$ query of large models.

 We collected the results for: \texttt{SAT} - the original SAT-based  approach \cite{ibrahim2019}, and the presented three approaches: \texttt{ILP}, \texttt{MaxSAT}, and \texttt{ILP$_{\texttt{why}}$}- the inference approach. We ran each query for $30$ warm-ups (dry-runs before collecting results to avoid accounting for factors like JVM warm-up), and $30$ measurement iterations on an i7 Ubuntu machine with $16$ GB RAM. 
We set the cut-off threshold to $2$ hours. 
%overall number of is queries: 484; 224 more than a second(SAT did 187); why queries : 180 (67 large) finished 63. 

{\textbf{Discussion.}}
Generally, we use cactus plots to compare the performance of the approaches. The x-axis shows the number of queries an approach answered 
ordered by the execution time, which is shown on the y-axis; a point ($x,y$) on the plot reads as $x$ queries can be answered in $y$ or less. 
Next, we discuss the overall trends of the results; however, since we are interested in notions of checking, and inference, we also mention specific queries in which AC3 does not hold. 

As expected, the experiments confirmed the problems with the \texttt{SAT} encoding--- significant solver slow-down and memory exhaustion---\cite{zhao2009asig}. Thus, as shown in  \autoref{fig:sub1}, \texttt{SAT} only answered $\mathbf{187}$ of the $\mathbf{224}$ checking queries; for the remaining either it ran out of memory or took more than $2$ hours. For instance, queries on $\mathit{SMK (6600)}$ checking causes of sizes $2,3,4$ were not answered because the program ran out of memory. With almost all answered queries, \texttt{SAT} took al least two to four times as much as \texttt{ILP}, and up to twenty times as much as  \texttt{MaxSAT}.
In extreme cases, \texttt{SAT} took around $113$ minutes to finish, whereas others stayed under $5s$ for the same cases. Memory allocation, shown in \autoref{fig:sub2}, was similar to the execution time. However, it showed less difference with \texttt{ILP} and sometimes better allocation. Although it is not surprising that an ALL-SAT encoding performs poorly in some situations, the key result is that both \texttt{ILP} and \texttt{MaxSAT} provide more informative answers to a query while performing better. 

% should like into insights in the data ilp mxsat or from summary
According to our dataset, both \texttt{ILP} and \texttt{MaxSAT}, answered all queries in less than $70-100$ seconds. Especially for semi-inference, cases of \textit{non-minimal} causes and a minimal cause can be found, they are effective. For instance,  with queries using $\mathit{ABT (4103)}$, we found causes of size $2$, $5$, and $11$ out of candidate causes of sizes $5$, $10$, $15$, and $50$. 
All these queries were answered in around $5s$ using \texttt{ILP}, and $2s$ using \texttt{MaxSAT}. For larger and more complex models e.g., $\mathit{SMK (7150)}$, answering similar queries jumped to $98s$ with \texttt{ILP} and $71s$ \texttt{MaxSAT}. 

As shown in \autoref{fig:sub1} and \autoref{fig:sub2}, \texttt{MaxSAT} outperformed \texttt{ILP} in execution time and memory; a scatter plot to compare them is shown in \autoref{fig:sub3}. The propositional nature of the problem gives an advantage to \texttt{MaxSAT}. Especially for easier queries, as shown in \autoref{fig:sub3} bottom left, \texttt{MaxSAT} is much faster because no linear transformation is needed, which explains why the gap between the two decreases among the larger queries. Further, we used Open-WBO ---a solver that uses cores to initiate (UN)SAT instances \cite{wbo}--- which performs better, especially when the number of hard clauses is high~\cite{bacchus2014cores}. That said, in addition to the comparison, we used \texttt{ILP} for binary computations to incorporate quantifiable notions to infer causality  using multi-objective ILP in \texttt{ILP$_{\texttt{why}}$}.
\begin{figure}[t]
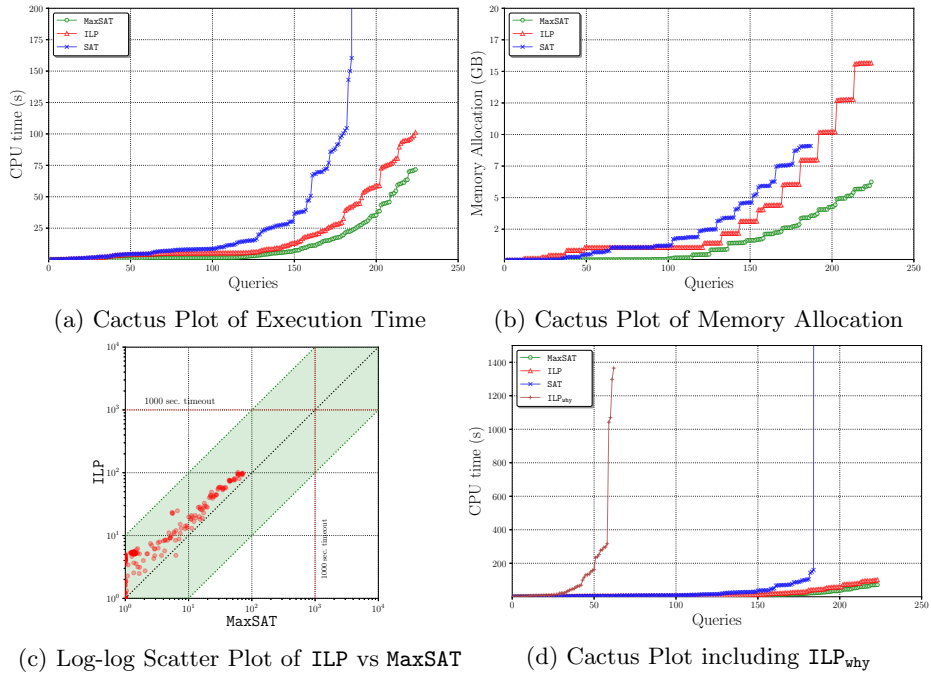

	\begin{subfigure}{.5\textwidth}
		\centering
		\scalebox{0.3}{\input{data/plotdata/plots/cactus/ct-is-short.pgf}}
		\caption{Cactus Plot of Execution Time}
		\label{fig:sub1}
	\end{subfigure}%
	\begin{subfigure}{.5\textwidth}
		\centering
			\scalebox{0.3}{\input{data/plotdata/plots/cactus/memory/cm-is-short.pgf}}
		\caption{Cactus Plot of Memory Allocation}
		\label{fig:sub2}
	\end{subfigure}
\newline
	\begin{subfigure}{.5\textwidth}
	\centering
	\scalebox{0.3}{\input{data/plotdata/plots/scatter/sall1.pgf}}
	\caption{Log-log Scatter Plot of \texttt{ILP} vs \texttt{MaxSAT}}
	\label{fig:sub3}
\end{subfigure}%
\begin{subfigure}{.5\textwidth}
	\centering
	\scalebox{0.3}{\input{data/plotdata/plots/cactus/ct-why-short.pgf}}
	\caption{Cactus Plot including  \texttt{ILP$_{\texttt{why}}$}}
	\label{fig:sub4}
\end{subfigure}
	\caption{Execution Time and Memory Results on the Larger Models}
	\label{fig:test}
	\vspace{-3mm}
\end{figure}

Although we have fewer inference queries ($67$), for comparison, we plot the checking approaches with \texttt{ILP$_{\texttt{why}}$} in \autoref{fig:sub4}.
\texttt{ILP$_{\texttt{why}}$} answered $\mathbf{63}$ out of $\mathbf{67}$ queries. In comparison, it was slower than the checking approaches. Still, it scaled to large and complex queries. For instance, with basic tree models of $4000$ variables ($\mathit{BT_{11}}$, $\mathit{ABT}$), it took $8s$, and scaled to $8000$ variable $\mathit{ABT2}$ within $63s$. However, it slowed down with larger models with complex semantics, i.e., $\mathit{SMK}$ different variants.  For instance, $\mathit{SMK (5500)}$ took $280s$, while  $\mathit{SMK (6600)}$ jumped to $1400s$. The slow down is related to the memory allocation because the program, finally, ran out of memory with queries on $\mathit{SMK (7150)}$. Given sufficient memory, we think  \texttt{ILP$_{\texttt{why}}$} computes \textit{inference} for even larger models.

In summary, we argue that the three approaches efficiently automates actual causality reasoning over binary models. Our \texttt{MaxSAT} encoding performs well for purposes of causality checking and semi-inference. Although slower, \texttt{ILP$_{\texttt{why}}$}  is also efficient and scalable for purposes of inference. 
\section{Related Work} \label{sec:realted}
There are three versions of HP (\textit{original} 2001, \textit{updated} 2005, \textit{modified} 2015 ) \cite{halpern2016actual}. We use the latest because it solves issues with the previous versions, and reduces the complexity \cite{halpern2015modification}.  To the best of our knowledge, no previous work tackled the implementation of the (\textit{modified}) HP. Still, we discuss the implementations of previous versions. Previous work has proposed simplified adaptations of the definition for various applications. First, in the domain of databases  \cite{meliou2010causality,bertossi2018characterizing,salimi2014causes}, (updated) HP  was  utilized to explain conjunctive query results.  The approaches heavily depend on the correspondence between causes and domain-specific concepts such as lineage, database repairs, and denial constraints. The simplification in that line of work is the limitation to a single-equation causal model based on the lineage of the query in \cite{meliou2010causality}, or no-equation model in \cite{bertossi2018characterizing,salimi2014causes}, in addition to the elimination of preemption treatment.  
Similar simplification has been made for Boolean circuits in \cite{chockler2008causes}. Second, in the context of software and hardware verification, (updated) HP is used to explain counterexamples returned by a model checker \cite{beer2012explaining}. The authors also restricted the definition to singleton causes and no-equation models. Third, in \cite{beer2015symbolic,leitner-fischer2013causality}, the authors adapted HP to debug models of safety-critical systems.  
Similar to our approach, all the papers above use acyclic binary models. However, they depend heavily on the correspondence between causes and domain-specific concepts.  Also, for efficiency, they relax the definition by restricting the model, i.e., one equation\cite{meliou2010causality}, no-equations  \cite{bertossi2018characterizing,salimi2014causes,beer2012explaining}, 
or by restricting the cause, i.e., singleton~\cite{beer2012explaining,salimi2014causes}; the complexity is then relaxed, because AC2 is straightforward (no $\vec{W}$) or AC3 is not needed. While such limitations are sufficient for the particular use-case, we  argue that they cannot be used outside their respective domains, e.g., for accountability. In contrast, our approach is a general method to automate HP. We focus on the minimality, which, to the best of our knowledge, no previous work has tackled. We employ optimization solving, which was not utilized before in this context.  Alternatively, previous work used SAT directly \cite{ibrahim2019},  indirectly \cite{beer2015symbolic}, or answer set programming \cite{bertossi2018characterizing}.
Sharing our generality, Hopkins proposed methods to check (original) HP using search-based algorithms~\cite{hopkins2002strategies}. Our approach scales to thousands of variables, while the results presented in the search-based approaches showed a limit of 30 variables. 

Fault tree analysis (FTA) is an established design-time method to analyze safety risks of a system \cite{ruijters2015fault}. FTA's primary analysis is the computation of  minimal cut sets MCSs of a fault tee; a CS is a set of events that, together, cause the top-level event. 
Approaches to determine MCS use Boolean manipulation, or Binary decision diagrams \cite{ruijters2015fault}. These methods are similar to our computations; however, the conceptual difference is the definition of a cause. While a cause covers two notions: sufficiency and necessity, a CS  presents a sufficient cause only. The occurrence of the events in the cut leads to the occurrence of the top-level event. This roughly corresponds to AC1, while the minimality of the cut set corresponds to AC3. The difference lies in the necessity of the cause (AC2). An MCS computation does not include this step, which is the core of actual causality computation. Cut sets are all the enumerations that make the effect true. %Some of these enumerations will be an HP cause, and others will not depending on necessity. % If an HP cause, in a fault-tree causal model, exists it will be one of the minimal cut sets. % Not all the minimal cut sets are HP causes, e.g., cases of overdetermination (conjunctions). 
%Also, FTA does not reason about combination of events as an effect. 
%That said, for tree-based models, FTA and actual causality can complement each other; e.g., the list of MCSs can be used as a set of hypothesized causes for inference, or causal analysis and model checking can be used to construct the fault tree, like in \cite{kuntz2011probabilistic}.
Similarly, model-based diagnosis (MBD) aims to detect faulty components to explain anomalies in system behavior \cite{reiter1987theory}. The model is a set of logical expressions over a set of components. MBD requires a set of observations that correspond to the context $\vec{U}$; using logical inference, MBD outputs a set of hypotheses for how the system differs from its model, i.e., diagnoses. While MBD can be considered as an approach to infer causality, it does not require counterfactuality of the cause. Although MBD uses a notion of intervention (setting some components to abnormal), this is not  counterfactual reasoning. Instead, it is a sufficiency check since MBD uses a behavioral model, i.e., a representation of the correct behavior.  Like FTA,  diagnoses are sufficient causes, but not actual causes. 

\section{Conclusions and Future Work}\label{sec:conc}
According to HP, a set of events ($\vec{X}$) causes an effect ($\varphi$) if (1) both actually happen; (2) changing some values of $\vec{X}$ while fixing a set $\vec{W}$ of the remaining variables at their original value leads to $\varphi$ not happening; and (3) $\vec{X}$ is minimal. The complexity of the general problem has been established elsewhere. We show that when restricting to binary models, the problem of checking or inferring causality can effectively and efficiently be solved as an optimization problem. The problem is not trivial because intuitively, we need to enumerate all sets $\vec{W}$  from condition (2) and need to check minimality for condition (3). We show how to formulate both properties as an optimization problem instead which immediately gives rise to using a solver to determine if a cause satisfies all conditions, or find one that does. For that, we define an objective function that encodes the distance between cause values in the actual and counterfactual worlds. If we now manage to optimize the problem with a smaller cause, then we know that it satisfies condition (2) but is not minimal.  With an additional objective to quantify responsibility, we also formulate inference as an optimization problem. Using  models with  $8000$ variables, which we deem realistic and necessary for automatically inferred causal models, we show that our approaches answer checking queries in  seconds, and inference queries in  minutes. In the future, we plan to explore the extension of the approach to support non-binary models. 
%\paragraph{Acknowledgements}
%This work was supported by the German Research Foundation DFG under grant no. PR1266/4-1, Conflict resolution and causal inference with integrated socio-technical models

%
% ---- Bibliography ----
%
% BibTeX users should specify bibliography style 'splncs04'.
% References will then be sorted and formatted in the correct style.
%
 \bibliographystyle{splncs04}
 \bibliography{bibliography}

\appendix
\section{Proofs}\label{sec:ac3_proof}
In this section, we present proof sketches of the theorems in the paper. 
\subsection{Theorem 2 Proof Idea}
Before presenting the proof of \autoref{theorem:ilp}, we present \autoref{lemma:flemma}. 
Recall that Formula $F$ is constructed as follows.
\begin{small}
	\begin{equation}
	\begin{aligned}
	F := \neg\varphi \wedge \bigwedge\limits_{i=1\ldots n} f(U_i=u_i) 	\wedge\bigwedge\limits_{i=1\ldots m, \not\exists j\bullet X_j=V_i} \left(V_i \leftrightarrow F_{V_i} \lor f(V_i=v_i)\right)  \wedge\bigwedge\limits_{i=1\ldots\ell} f(X_i=\neg x_i)
	\end{aligned}
	\label{eq:f}
	\end{equation} 
\end{small}

\begin{lemma}\label{lemma:flemma}
	Formula $F$ constructed in \autoref{eq:f} is satisfiable iff AC2 holds for a given model $M$, context $\vec{u}$, candidate cause $\vec{X}$, and effect $\varphi$. 
\end{lemma}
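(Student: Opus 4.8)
The plan is to prove \autoref{lemma:flemma} by establishing the correspondence between satisfying assignments of $F$ and witnesses $(\vec{W}, \vec{x}')$ for \textbf{AC2}, in both directions. Recall that in binary models the only admissible alternative setting $\vec{x}'$ for the cause is the negation of the actual values (Lemma~1 of \cite{ibrahim2019}), which is precisely why $F$ hard-codes $\bigwedge_i f(X_i = \neg x_i)$ rather than leaving the cause values free. The first and conceptually cleaner direction is \emph{soundness}: assume $F$ is satisfiable and extract an \textbf{AC2} witness. First I would take a satisfying assignment $\sigma$ of $F$ and read off $\vec{W}$ as the set of endogenous variables $V_i$ (outside the cause) for which $\sigma$ chose the disjunct $f(V_i = v_i)$ rather than the equivalence $V_i \leftrightarrow F_{V_i}$; that is, $\vec{W}$ consists of the variables pinned to their actual values. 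I would then argue that under the intervention $[\vec{X} \leftarrow \vec{x}', \vec{W} \leftarrow \vec{w}]$ the assignment $\sigma$ is exactly the unique solution of the intervened model $M_{\vec{X}=\vec{x}',\vec{W}=\vec{w}}$ given context $\vec{u}$: the context literals $f(U_i=u_i)$ force the exogenous values, the cause literals force $\vec{X}=\vec{x}'$, the $\vec{W}$-variables are fixed to $\vec{w}=\vec{v}|_{\vec W}$, and every remaining endogenous variable satisfies its structural equation $V_i \leftrightarrow F_{V_i}$. Because the model is acyclic this solution is unique, and since $\sigma \models \neg\varphi$, we get $(M,\vec u)\models[\vec X\leftarrow\vec x',\vec W\leftarrow\vec w]\neg\varphi$, which is \textbf{AC2}.

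The second direction is \emph{completeness}: assume \textbf{AC2} holds and construct a satisfying assignment of $F$. Given a witnessing $\vec{W}$ with $\vec{w}=\vec{v}|_{\vec W}$ and the (forced) setting $\vec{x}'=\neg\vec{x}$, I would let $\tau$ be the unique solution of $M_{\vec{X}=\vec{x}',\vec{W}=\vec{w}}$ under $\vec{u}$ and verify that $\tau$ satisfies each conjunct of $F$. The $\neg\varphi$ conjunct holds by \textbf{AC2}; the context and cause conjuncts hold by construction; and for each non-cause endogenous $V_i$ I would check the disjunction $(V_i\leftrightarrow F_{V_i})\lor f(V_i=v_i)$ holds: if $V_i\in\vec W$ it is fixed to its actual value $v_i$, satisfying the right disjunct, and if $V_i\notin\vec W$ it obeys its equation, satisfying the left disjunct. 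One subtlety I would flag here: a variable in $\vec W$ that happens to also satisfy its equation still satisfies the clause (both disjuncts hold), and a variable in $\vec W$ whose fixed value equals $v_i$ trivially satisfies the right disjunct regardless of its equation — so the encoding correctly tolerates the ``redundant'' membership of variables in $\vec{W}$.

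The main obstacle I expect is the careful handling of the non-uniqueness and slackness in the correspondence between $\vec{W}$ and the chosen disjuncts. The disjunction $(V_i\leftrightarrow F_{V_i})\lor f(V_i=v_i)$ means the solver is free to place a variable ``in $\vec W$'' even when it would have followed its equation anyway, so the recovered $\vec{W}$ from Line~\ref{alg:line:ilp:w} of Algorithm~\ref{algorithm:ILP} is generally \emph{not} minimal; I must be explicit that \textbf{AC2} only demands \emph{existence} of some $\vec{W}$, not minimality, so this slackness is harmless for the equivalence. The other delicate point is justifying that a satisfying assignment of $F$ genuinely corresponds to the \emph{unique} model solution under the induced intervention — here I would lean on acyclicity to argue that fixing the exogenous variables, the cause, and $\vec{W}$, while requiring all other variables to satisfy their equations, determines every endogenous value uniquely, so the propositional assignment and the structural-equation semantics coincide. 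With these two directions established, \autoref{lemma:flemma} follows immediately, and it will serve as the \textbf{AC2} building block for the subsequent proof of \autoref{theorem:ilp}.
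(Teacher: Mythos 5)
Your proof is correct and follows essentially the same route as the paper's: both directions rest on the correspondence between satisfying assignments of $F$ and the unique solution of the intervened model $M_{\vec{X}\leftarrow\vec{x}',\vec{W}\leftarrow\vec{w}}$ under acyclicity, with $\vec{W}$ read off as the non-cause variables pinned to their actual values. The only difference is presentational: you argue both directions directly and constructively, while the paper wraps the same content in proofs by contradiction.
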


\begin{proof} The proof consists of two parts.
	\begin{proofpart}
		SAT($F$) $\implies$ AC2, AC2 holds if $F$ is satisfiable 
	\end{proofpart}
	We show this by contradiction. Assume that $F$ is satisfiable and AC2 does not hold. Based on $F$'s truth assignment, $\vec{v'}$, we cluster the variables  into:

	\noindent\textbf{1.} $\vec{X}$: each variable is fixed exactly to the negation of its original value, i.e.,  $X_i=\neg x_i \forall X_i \in \vec{X}$ (recall $\vec{X} \subseteq \vec{V}$).	\textbf{2.} $\vec{W}^*$: variables in this group, if they exist, have equal truth  and original assignments, i.e., $\langle W^*_1,\ldots,W^*_s\rangle$ s.t. $\forall i\forall j\bullet (i\not= j\Rightarrow W^*_i\not= W^*_j) \wedge (W^*_i=V_j\Leftrightarrow v_j'=v_j)$	\textbf{3.} $\vec{Z}$: variables in this group evaluate differently from their original evaluation, i.e., $ \langle Z_1,\ldots,Z_k\rangle$ s.t. $\forall i\forall j\bullet (i\not= j\Rightarrow Z_i\not= Z_j) \wedge (Z_i=V_j\Leftrightarrow v_j'\not= v_j) \wedge (\forall i\not\exists j\bullet Z_i=X_j$).  \\

\noindent Using $\vec{W^*},\vec{Z}$, we re-write $F$ as $F'$ which is also satisfiable. 
\vspace{-2mm}
\begin{align*}
  \resizebox{1.0\hsize}{!}{%
$F' := \neg\varphi \wedge \bigwedge_{i=1\ldots n} f(U_i=u_i) 
\wedge\bigwedge_{i=1\ldots\ell} f(X_i=\neg x_i)
\wedge\bigwedge_{i=1\ldots s}  f(W^*_i=w^*_i)
\wedge\bigwedge_{i=1\ldots k} (Z_i \leftrightarrow F_{Z_i})$
}
\end{align*}
\vspace{-4mm}
%from here eith map to the causality model or go from the other side
%Since $M$ is acyclic, then there is a unique solution of the equations for a given context $\vec{u}$ \cite{halpern2016actual} that can be inferred from the formula.The truth assignment of the formula fixes values in $\vec{X}, \vec{W}, \vec{U}$ and leaves the other variables in $\vec{Z}$ to evaluate according to their equations, evaluating $\varphi$ to its negation.

\noindent Recall that $M$ is acyclic; therefore there is a unique solution to the equations. Let $\Psi$ be the equations in $M$ without the equations that define the variables $\vec{X}$. Let $\Psi_k$ be $\Psi$ without the equations of some variables in a set $\vec{W_k}$. Since AC2 does not hold, $\forall k \bullet \vec{W_k} \subseteq  V\setminus X \Rightarrow (\vec{X}=\vec{\neg x}\wedge\vec{W_k}=\vec{w_k} \wedge \Psi_k\wedge \neg\varphi)$ evaluates to \textit{false}. %, where $\mathcal{P}(V/\vec{X})$ is the power set of variables in $\mathcal{V}$ without variables in $\vec{X}$. 
In case $\vec{W_k} = \vec{W^*}$, the previous unsatisfiable formula is equivalent to the satisfiable $F'$, implying a contradiction.

\begin{proofpart}\label{lemma:f}
	 AC2 $\implies$ SAT($F$); $F$ is satisfiable if AC2 holds 
\end{proofpart}
	Assume that AC2 holds and $F$ is unsatisfiable. Then $\exists \vec{W},\vec{w},\vec{x}'\bullet  (M,\vec{u})\models(\vec{W}=\vec{w})
	\implies (M,\vec{u})\models\bigl[\vec{X}\leftarrow \vec{x}',\vec{W}\leftarrow\vec{w}\bigr]\neg\varphi$. By definition \cite{halpern2015modification}, $(M, \vec{u}) \models [Y_1 \leftarrow y_1.. Y_k \leftarrow y_k]\varphi$ is equivalent to $(M_{Y_1 \leftarrow y_1..Y_k \leftarrow y_k}, \vec{u}) \models \varphi$, i.e., we replace specific equations in $M$ to obtain a new model $M' = M_{Y_1 \leftarrow y_1, ..., Y_k \leftarrow y_k}$. So,we replace the equations of the variables in $\vec{X},\vec{W}$ in $M$ to obtain a new model, $M'$, such that $(M',\vec{u})\models\neg\varphi$. Equations of $\vec{X},\vec{W}$  variables are now of the form  $V_i=v_i$, i.e., each variable is equal to a constant value. Note that $M'$ is only different from $M$ in the equations of $\vec{X},\vec{W}$. Hence, $M'$	is acyclic and has a unique solution for a given $\vec{U}=\vec{u}$. We construct a formula, $F'$ (shown below), that is a conjunction of the variables in sets $X', W', U$ in $M'$. Because of their equations, each variable is represented by a constant, i.e., a positive or a negative literal. Based on the nature of this formula, it is satisfiable with exactly the same truth assignment as the unique solution of $M'$.   

\begin{small}
	\begin{align*}
	F' := \bigwedge_{i=1\ldots n} f(U_i=u_i) 
	\wedge\bigwedge_{i=1\ldots s}  f(W_i'=w_i') 
	\wedge\bigwedge_{i=1\ldots\ell} f(X_i'= x_i')
	\end{align*}
\end{small}

\noindent Now, we add the remaining variables, i.e., $\forall i \bullet V_i\notin(\vec{X}\cup\vec{W})$, as formulas using the $\leftrightarrow$ operator. The overall formula $F''$, is satisfiable because we have an assignment that makes each equivalence relation true.
 \begin{small}
	\begin{align*}
	F'' :=  F' \wedge \bigwedge_{i=1\ldots m, \not\exists j\bullet X_j=V_i, W_j=V_i}  (V_i \leftrightarrow F_{V_i})
	\end{align*}
\end{small}
	\noindent We have $(M',\vec{u})\models\neg\varphi$, which says that the model evaluates $\neg\varphi$ to true with its unique solution (same assignment of $F''$). We add another clause to $F''$ which evaluates to true and keeps the formula satisfiable. That is, $ F''' :=  F'' \wedge \neg\varphi$. Last, we only have to show the relation between ($F$ and  $F'''$). We can rewrite $F$ (shown at the beginning of the proof) such that we remove all disjuncts of the form $(V_i \leftrightarrow F_{V_i})$ for the variables in $\vec{W}$. Similarly, we remove  all disjuncts of the form $f(V_i=v_i)$ for all the variables that are not in $\vec{W}$. According to our assumption, $F$ is still unsatisfiable, since we removed disjunctions from the clauses. Then, we reach a contradiction since $F$ is equivalent to $F'''$ which is satisfiable for the same clauses.
\end{proof}

\repeattheorem{ilp}
\begin{proof}
	\noindent The proof follows form the remark that $G$, which both formulations are based on, is a generalization of $F$ and is satisfiable if the context $\vec{U}$ makes $\varphi$ evaluate to its negation, given that the semantics of the model is expressed using the constraints added, and the cause set, $\vec{X}$, is not constrained to have other values ($\vec{x'}$). We show this in the following: 
	\begin{enumerate}
		\item Recall $G := \neg\varphi \wedge \bigwedge_{i=1\ldots\ell} f(U_i=u_i)\wedge\bigwedge_{i=1\ldots m, \not\exists j\bullet X_j=V_i} (V_i \leftrightarrow F_{V_i} \lor f(V_i=v_i))	\wedge\bigwedge_{i=1\ldots\ell} (X_i\lor\neg X_i)$. Rewrite the formula to abstract the first part as, $G:= G_{base}\wedge\bigwedge_{i=1\ldots n}( X_i\lor\neg X_i)$.  
		\item  Note how  $\vec{X_{(n)}}$ is added to $G$ as $ (X_1 \vee \neg X_1)\wedge(X_2 \vee \neg X_2)\dots (X_n \vee \neg X_n)$. Re-write this big conjunction to its equivalent disjunctive normal from (DNF) i.e., $ (\neg X_1 \wedge \neg X_2 \dots \wedge \neg X_n) \lor(\neg X_1 \wedge \neg X_2 \dots \wedge X_n)\dots\lor(X_1 \wedge X_2 \dots \wedge X_n)$. Assume wlog
		that all the actual values of $\vec{X_{(n)}}$ were \textit{true}, hence to check them in AC2 we need to have their values negated, i.e., $\neg X_i$. Looking at the DNF, we have $2^n$ clauses that list all the possible cases of negating or fixing the elements in $\vec{X}$. Then, we partition $G$ according to the clauses, i.e, $G:=G_1 \lor G_2\dots G_{2^n}$, where $G_1:=G_{base}\wedge (\neg X_1 \wedge \neg X_2 \dots \wedge \neg X_n)$. $G_1$, is formula $F$ for $\vec{X}$, which according to \autoref{lemma:flemma} is satisfiable iff AC2 holds for $\vec{X}$. $G$ holds if any $G_i$ hold.  		
		\item Generally $G_i$, fixes some (possibly none) elements to their original evaluation ($X_i$) and negates some, possibly none ($G_{2^n}$), other elements ($\neg X_i$).  $G_i$ is an $F$ formula (from \autoref{eq:f}) for the negated variables, in a clause, as $\vec{X}$ but with some special fixed variables that are added to $\vec{W}$.
		 Based on \autoref{lemma:flemma}  $G_i$ is satisfiable iff AC2 for a the subset of the causes given that the other part (fixed) of the cause is in $\vec{W}$, holds. Thus $G$ is satisfiable if AC2 holds for any subset of it.		
		\item The transformation from $G$ to an ILP program $P$ is proved to be correct~\cite{li2004satisfiability}. This means that satisfiability of $G$ entails feasibility of $P$. $P$ is then feasible if AC2 holds for the A.) whole $\vec{X}$, B.) parts of $\vec{X}$, or C.) an empty set of causes. Adding the distance constraint to $P$ results in a new program $P'$.  Recall that the distance will be the count of variables $\in \vec{X}$ that have a value $	\vec{x'}$ in the solution of $P'$. The distance should be greater than 0, i.e., case C is treated. By its nature, the ILP solver will  pick the solution set that makes the distance the least. Hence, if  $\vec{X}$ is minimal in fulfilling AC2 it will be picked, i.e., case A. Similarly case B is treated.
		\item Similarly, since $G$ forms the hard clauses of the MaxSat $G_{max}$, 
		then  $G_{max}$ is satisfiable if $G$ is satisfiable. 
		$G_{max}$ is then satisfiable if AC2 holds for the A.) whole $\vec{X}$, B.) parts of $\vec{X}$, or C.) an empty set of causes. 
		We get rid of (C) by adding $K$ clause as a hard clause. As such, 	$G_{max}$ is satisfiable only when for cases A and B.
	\end{enumerate}
	\end{proof}

\subsection{Inference Proofs}
\repeattheorem{why}
% there could be an assignment with empty X but the solver won't pick.
\begin{proof}  	
 The proof follows from the correspondence between formula $G^*$ and $F$. The proof consists of two parts.
	\begin{proofpart}
		SAT($G^*$) $\implies$ $\exists \vec{X}$ such that AC2 holds for $\vec{X}$
	\end{proofpart}
	We show this by contradiction. Assume that $G^*$ is satisfiable and $\not\exists \vec{X}$ such that AC2 holds. 
	\begin{enumerate}
		\item $	G^* :=   \neg\varphi \wedge \bigwedge_{i=1\ldots n} f(U_i=u_i) 
		\wedge\bigwedge_{i=1\ldots m} \left( \left( \left(V_i \leftrightarrow F_{V_i}\right) \land C^1_{i} \right)  \lor \left( \neg \left(V_i \leftrightarrow F_{V_i}\right) \land \neg C^1_{i} \right) \right) \\ 
		\land \left( \left( V_{orig} \land C^2_{i} \right)  \lor \left( \neg V_{orig} \land \neg C^2_{i} \right) \right) $	
		\item For readability let us call $\left(V_i \leftrightarrow F_{V_i}\right)$ as $e_i$. Since $G^*$ is satisfiable, every conjunction $CON_i$: $\left( \left( e_i \land C^1_{i} \right)  \lor \left( \neg e_i \land \neg C^1_{i} \right) \right)  
		\land \left( \left( V_{orig} \land C^2_{i} \right)  \lor \left( \neg V_{orig} \land \neg C^2_{i} \right) \right)$ holds. It is a matter of natural deduction to show that when $CON_i$ holds with values $(C^1_{i} \lor C^2_{i})$ ($01,10,11$)
		it implies $e_i \lor V_{orig}$, that is proving the following proposition $ ( (e_i \land C^1_{i}) \lor (\neg e_i \land  \neg C^1_{i}) ) \land  ( (V_{orig} \land  C^2_{i}) \lor (\neg V_{orig} \land  \neg C^2_{i}) ) )  \land  (C^1_{i} \lor C^2_{i}) \implies e_i \lor V_{orig} $. The only remaining case of  $(C^1_{i}  C^2_{i})$ is $00$. This case, in turn, implies 
		 $\neg e_i \land \neg V_{orig}$. That is the proposition:  
		 $ ((e_i \land C^1_{i}) \lor (\neg e_i \land  \neg C^1_{i})) \land  ((V_{orig} \land  C^2_{i}) \lor (\neg V_{orig} \land  \neg C^2_{i}) )   \land   (\neg C^1_{i} \land \neg C^2_{i}) \implies \neg e_i \land \neg V_{orig} $ can be proved by deduction. Note that there is no guarantee that the $(C^1_{i}  C^2_{i}) = 00$ case always exists (this case is handled by the algorithm). 		 
		 \item For each variable in $M$, adding the implications from above to a formula $Y= \neg\varphi \wedge \bigwedge_{i=1\ldots n} f(U_i=u_i)$ would result in an $F$ formula (from \autoref{eq:f}) for some $\vec{X}$. $Y$ is satisfiable which by \autoref{lemma:flemma} makes AC2 holds for $\vec{X}$. This contradicts with the first assumption.   
    	\end{enumerate}
	
	\begin{proofpart}\label{part1}
				$\exists \vec{X}$ such that AC2 holds for $\vec{X}$ $\implies$  SAT($G^*$)
	\end{proofpart}
		We show this by contradiction, as well. Assume $\exists \vec{X}$ such that AC2 holds, and  that $G^*$ is un-satisfiable. Since AC2 holds then there exists a stisfiable $F$ as constructed in \autoref{eq:f}. Similar to the first part of the proof, since each variable $V_i$ has a satisfiable conjunction in $F$, it implies a conjuntion in $G^*$ (the inverse of the implications in the first part (without $C^1_{i}, C^2_{i}$)). With that $G^*$ is satisfiable. This contradiction proves the second part of the theorem.
\end{proof}

\end{document}